\documentclass[runningheads]{llncs}

\usepackage[year=2026]{eccv}
\usepackage{eccvabbrv}

\usepackage{graphicx}
\usepackage{booktabs}
\usepackage{amsmath,amssymb,amsfonts,mathtools,bm}
\usepackage{tabularx,array,multirow}
\usepackage{xcolor}
\usepackage{xspace}
\usepackage{algorithm}
\usepackage{algpseudocode}
\usepackage{tikz}
\usepackage{pgfplots}
\usepgfplotslibrary{groupplots}
\pgfplotsset{compat=1.18}

\usepackage[breaklinks,colorlinks,citecolor=eccvblue]{hyperref}
\hypersetup{
  pdftitle={When RLVR Shrinks the Reasoning Boundary: Diagnosing Pass@k Inversion},
  pdfauthor={Todd Zhou},
  pdfkeywords={reinforcement learning with verifiable rewards, pass@k inversion, reasoning models, base anchoring}
}
\newcolumntype{Y}{>{\raggedright\arraybackslash}X}

\newcommand{\E}{\mathbb{E}}
\newcommand{\Prb}{\mathbb{P}}
\newcommand{\one}{\mathbf{1}}
\newcommand{\DataDist}{\mathcal{D}}
\newcommand{\KL}{\mathrm{KL}}

\newcommand{\PassK}[1]{\ensuremath{\mathrm{pass}@#1}}
\newcommand{\Base}{\pi_0}
\newcommand{\Pol}{\pi_\theta}
\newcommand{\wstar}{w^{\star}}
\newcommand{\what}{\widehat{w}^{\star}}
\newcommand{\DeltaCov}{\Delta_{\mathrm{cov}}}
\newcommand{\methodname}[1]{\ifmmode\mathrm{#1}\else\textsc{#1}\xspace\fi}

\newcommand{\GRPO}{\methodname{GRPO}}
\newcommand{\PBA}{\methodname{PBA}}
\newcommand{\GRPOPBA}{\methodname{GRPO+PBA}}
\newcommand{\PPO}{\methodname{PPO}}
\newcommand{\RLOO}{\methodname{RLOO}}
\newcommand{\ReMax}{\methodname{ReMax}}
\newcommand{\DAPO}{\methodname{DAPO}}
\newcommand{\Reinforce}{\methodname{Reinforce++}}

\definecolor{BaseBlue}{RGB}{0,114,178}
\definecolor{RLRed}{RGB}{213,94,0}
\definecolor{PBAGreen}{RGB}{0,158,115}
\definecolor{BoundaryPurple}{RGB}{117,107,177}
\definecolor{SoftGray}{RGB}{115,115,115}
\definecolor{NeutralGray}{RGB}{150,150,150}
\definecolor{Gold}{RGB}{188,128,0}
\pgfplotsset{
  passplot/.style={
    width=0.292\linewidth,
    height=0.228\linewidth,
    xmin=1,xmax=256,
    xmode=log,
    log basis x=2,
    xtick={1,4,16,64,256},
    xticklabels={1,4,16,64,256},
    ymin=0,ymax=1,
    ytick={0,0.5,1},
    tick label style={font=\tiny},
    label style={font=\tiny},
    title style={font=\scriptsize,yshift=-0.45ex},
    xlabel style={yshift=0.35ex},
    ylabel style={yshift=-0.6ex},
    grid=both,
    grid style={line width=.1pt, draw=gray!24},
    major grid style={line width=.2pt,draw=gray!35},
    axis line style={gray!55},
    tick style={gray!55},
    clip=false
  },
  smallplot/.style={
    tick label style={font=\scriptsize},
    label style={font=\scriptsize},
    title style={font=\small},
    xlabel style={yshift=0.35ex},
    ylabel style={yshift=-0.5ex},
    grid=major,
    grid style={draw=gray!20},
    axis line style={gray!55},
    tick style={gray!55}
  }
}

\begin{document}

\title{When RLVR Shrinks the Reasoning Boundary:\texorpdfstring{\\}{ }
Diagnosing Pass@\texorpdfstring{$k$}{k} Inversion}
\titlerunning{Diagnosing Pass@k Inversion}

\author{Todd Y. Zhou}
\institute{Harvard University}

\maketitle
\raggedbottom

\begin{abstract}
Reinforcement learning with verifiable rewards (RLVR) can improve one-sample
accuracy while making a model worse under repeated sampling. We study this
\emph{pass@k inversion}: after training, the policy may solve fewer distinct
problems than its base model at large $k$. The failure concentrates on
\emph{boundary prompts}, where the base model contains rare correct trajectories
that are recoverable by sampling but too sparse to reliably appear in finite
RLVR rollout groups. We argue that a two-mode account explains this as an
absence-of-evidence failure: rare correct trajectories may disappear before
RLVR samples and reinforces them often enough. The main contribution is this
diagnostic and mechanistic framing. Per-Problem
Base Anchoring (PBA) is a deliberately simple proof-of-concept: sharpen prompts
with sufficient frozen-base correct evidence, and anchor risky prompts to the
base distribution. Across three training seeds on Omni-MATH-Test, with MATH500
as a secondary high-coverage validation benchmark, PBA improves both \PassK{1} and high-budget
coverage over matched GRPO. A 3000-prompt
regime-controlled diagnostic study is consistent across seeds with the expected
signature: ordinary GRPO loses base-solvable boundary prompts, while PBA
preserves rare verifier-positive trajectories. We use mathematical verifiers as
a controlled testbed for verifier-guided optimization; the same pass@k inversion
risk applies to ECCV-relevant vision-language agents when repeated visual,
spatial, or chart-reasoning attempts are checked by external tools or verifiers.
Reasoning post-training should decide not only how strongly to optimize, but
which prompts are safe to optimize.
\keywords{Reinforcement learning with verifiable rewards \and reasoning models \and pass@k inversion \and base anchoring}
\end{abstract}

\section{Introduction}

Reinforcement learning with verifiable rewards has become a central tool for
improving reasoning models. In mathematics, code, and other domains
with automatic checkers, it is natural to sample a solution, verify it, and
reinforce successful trajectories. This recipe often improves \PassK{1}, and
such gains are commonly interpreted as evidence that the model has acquired
stronger reasoning ability.

That interpretation is incomplete. A reasoning model is not only a source of a
single answer; it is a distribution over possible attempts. Modern inference
pipelines exploit this distribution through self-consistency, verifier reranking,
search, repair, code execution, and repeated sampling. For these systems, the
critical question is not only whether the top sample is correct, but whether a
correct trajectory remains reachable in the policy support. Recent work shows
that RLVR can fail precisely on this axis: it can increase \PassK{1} while
reducing \PassK{k} at large $k$, so that the trained policy solves fewer
distinct problems than the base model when allowed many attempts~\cite{yue2025does}.
We call this failure \emph{pass@k inversion}.

Entropy collapse is a useful symptom, but not a sufficient explanation. On easy
prompts whose dominant sampled mode is already correct, sharpening is desirable.
On prompts outside the base model's support, there is little coverage to lose.
The damaging case is the boundary regime: prompts for which correct trajectories
exist under repeated sampling but are not the dominant mode. These are the
prompts where large-$k$ inference is valuable. The base model is unreliable
in one attempt, yet it retains latent solution mass that repeated attempts can
recover.

Our central claim is that pass@k inversion is a \emph{boundary
mode-commitment failure}. RLVR does not merely lower entropy; on boundary
prompts, it can commit away from rare correct modes. The mechanism is not that
all-zero reward groups directly reward an incorrect answer. Rather, all-zero
groups provide little local corrective signal, while sparse positive groups and
shared-parameter updates elsewhere in training sharpen the policy before the
rare correct branch is reinforced often enough. The practical effect is
prompt-conditioned forgetting: the trained policy becomes more confident while
destroying coverage that repeated sampling previously recovered.

We make this claim precise and actionable through three contributions.
\begin{itemize}
\item \textbf{Diagnostic framing.} We decompose prompts by frozen base-model
behavior into solved-easy, reachable, boundary, and out-of-reach regimes, turning
an aggregate pass@k curve into an attribution problem.
\item \textbf{Mechanistic model.} We give an idealized two-mode model in which
finite-sample RLVR can commit to the wrong mode before rare correct trajectories
are observed, predicting that coverage loss should localize to boundary prompts.
\item \textbf{Diagnostic validation.} We use \PBA, a minimal prompt-conditional
base anchor, as an intervention probe, and evaluate it across three training
seeds with full pass@k curves, a 3000-prompt regime-controlled study, matched
controls, and gate ablations.
\end{itemize}

Our main contribution is a diagnostic and mechanistic account of pass@k
inversion; \PBA is a deliberately simple intervention showing that the
diagnostic can guide safer optimization.

The claim is intentionally limited. \PBA does not discover solution modes absent
from the base model, nor does it replace distillation, curriculum learning,
process supervision, tool use, or search. It solves a more immediate failure:
current RLVR can destroy coverage the base model already possessed. Preventing
that destruction is a prerequisite for evaluating whether future methods truly
expand the reasoning boundary.
We position this as an ECCV workshop contribution on reliable verifier-guided
reasoning and evaluation. We use mathematical verifiers as a controlled testbed
because they isolate the verifier-guided RLVR effect; the same pass@k inversion
risk applies to vision-language agents whenever repeated visual, spatial, or
chart-reasoning attempts are checked by external tools or verifiers. In such
systems, a VLM is often useful precisely because large-$k$ search can recover a
rare correct visual rationale, program, or geometry solution; an RL step that
improves the first sample while erasing those rare modes would make the deployed
agent less reliable under test-time compute.

\section{Related Work}

\paragraph{RLVR and reasoning-boundary shrinkage.}
RLVR methods such as \PPO~\cite{schulman2017ppo}, \GRPO~\cite{shao2024deepseekmath}, \RLOO~\cite{ahmadian2024back}, \Reinforce~\cite{hu2025reinforcepp}, \ReMax~\cite{li2023remax}, and \DAPO~\cite{yu2025dapo} optimize
language models using automatically verifiable outcome rewards. Modern
post-training stacks have made such procedures increasingly practical at
scale~\cite{sheng2024hybridflow,lambert2024tulu}. Their success is often
summarized by \PassK{1}. Yue et al.~\cite{yue2025does} show that this is
incomplete: across algorithms, model families, and task domains, RLVR can
improve \PassK{1} while shrinking large-$k$ coverage. Related work reports
diversity collapse, entropy collapse, and evidence that many post-training
reasoning behaviors originate in the base model rather than being newly
discovered by RL~\cite{yuan2026diversity,cui2025entropy,razin2024vanishing,liu2025r1zero,zhao2025echo}.
Most closely, Yuan et al.~\cite{yuan2026diversity} view diversity collapse
through the lens of overtraining. We instead isolate finite-sample commitment to
the wrong side of a latent mixture and turn frozen base correct-mass estimates
into a prompt-level intervention.

\paragraph{Entropy control and conservative optimization.}
Maximum-entropy RL~\cite{haarnoja2018soft}, KL-regularized RLHF~\cite{korbak2022rl}, conservative offline RL~\cite{kumar2020conservative}, and preference-optimization methods~\cite{rafailov2023direct} all recognize that unconstrained optimization can over-concentrate a policy. But global entropy control is poorly matched to pass@k inversion. Easy prompts benefit from sharpening; boundary prompts need entropy preservation. \PBA is therefore a problem-conditional trust-region rule rather than a uniform entropy bonus.

\paragraph{Distillation, search, and support expansion.}
Distillation and search can expand coverage by changing which trajectories are available to the student. RLVR without external exploration primarily reweights trajectories sampled from the student distribution. This distinction is central to our theory: pass@k expansion requires increasing the correct-mode mass $\wstar(x)$, while ordinary RLVR can only exploit, preserve, or collapse what the base distribution already contains.

\section{Preliminaries}
\label{sec:prelim}

Let $x\sim\DataDist$ be a problem and $y\sim\pi(\cdot\mid x)$ be a generated
solution. A verifier~\cite{lightman2023verify} returns
$r(x,y)\in\{0,1\}$. The per-problem success
probability is
\begin{equation}
    p_\pi(x)=\E_{y\sim\pi(\cdot\mid x)}[r(x,y)].
\end{equation}
Following standard repeated-sampling evaluation~\cite{chen2021evaluating}, the
population pass@k metric is
\begin{equation}
    \PassK{k}(\pi)=\E_{x\sim\DataDist}\left[1-\left(1-p_\pi(x)\right)^k\right].
    \label{eq:passk}
\end{equation}
With $n=256$ sampled completions for a problem and $c_\pi(x)$ verifier-positive
completions, we use the standard finite-sample estimator
\begin{equation}
    \widehat{\PassK{k}}(\pi;x)=
    1-\frac{\binom{n-c_\pi(x)}{k}}{\binom{n}{k}},
    \label{eq:passk_estimator}
\end{equation}
with the numerator treated as zero when $n-c_\pi(x)<k$. Reporting the full curve
over $k\in\{1,4,16,64,256\}$ is important because \PassK{1} and high-budget
coverage can move in opposite directions.
We write $\Base$ for the frozen base model and $\Pol$ for the trained policy. We
also report the high-budget coverage change
\begin{equation}
    \DeltaCov(\Pol)=\PassK{256}(\Pol)-\PassK{256}(\Base),
    \label{eq:cov_delta}
\end{equation}
where positive values mean that post-training expands repeated-sampling
coverage beyond the base model, and negative values mean that post-training has
shrunk the reachable set. This scalar is useful only together with \PassK{1}: a
method that improves \PassK{1} while making $\DeltaCov<0$ has not solved the
boundary problem.

\section{Pass@\texorpdfstring{$k$}{k} Inversion Is a Boundary-Regime Failure}
\label{sec:regimes}

We classify prompts using frozen base-model rollouts. Let $p_{\Base}(x)$ denote
the one-sample base success probability. These regimes are diagnostic
stratifications, not training labels; the PBA gate in \cref{sec:pba} uses only
base-rollout rewards for the corresponding training prompt. We use four regimes:
\begin{align*}
\textsc{SolvedEasy}:&\quad p_{\Base}(x)>0.6,\\
\textsc{Boundary}:&\quad p_{\Base}(x)<0.10\ \text{ and }\ 1-(1-p_{\Base}(x))^{256}>0.4,\\
\textsc{Reachable}:&\quad 0.10\le p_{\Base}(x)\le0.6,\\
\textsc{OutOfReach}:&\quad 1-(1-p_{\Base}(x))^{256}\le0.4.
\end{align*}
The thresholds are not fundamental constants; they operationalize the distinction between prompts that are correct-dominant, rare-but-recoverable, moderately reachable, and essentially absent from the base support. We use the bins for diagnosis, and treat robustness to nearby cutoffs and continuous stratification by $p_{\Base}(x)$ as part of the required evidence. In the diagnostic study, regime labels are computed from a frozen-base calibration cache that is independent of the evaluation completions used to estimate base pass@k. This avoids mechanically forcing boundary prompts to have empirical \PassK{256}=1. The central prediction is that large-$k$ loss should be concentrated in the boundary regime.

\section{A Two-Mode Theory of Pass@\texorpdfstring{$k$}{k} Inversion}
\label{sec:theory}

Fix a prompt $x$. Model the base rollout distribution as
\begin{equation}
    \Base(y\mid x)=\wstar(x)p_+(y\mid x)+(1-\wstar(x))p_-(y\mid x),
    \label{eq:mixture}
\end{equation}
where $p_+$ is a correct trajectory family and $p_-$ is an incorrect trajectory family. In the idealized binary case,
\begin{equation}
    \E_{p_+}[r(x,y)]=1,\qquad \E_{p_-}[r(x,y)]=0,
\end{equation}
so $p_{\Base}(x)=\wstar(x)$. The scalar $\wstar(x)$ is the correct-mode mass.

Exact KL-regularized RL with full reward expectation would amplify the correct
branch whenever $\wstar(x)>0$: Boltzmann reweighting gives
$p_{\pi_\beta}(x)=\wstar(x)e^\beta/(\wstar(x)e^\beta+1-\wstar(x))$, which
increases with $\beta$. The observed inversion is therefore not explained by the
exact regularized optimum. It is a finite-sample training phenomenon. Each RLVR
update sees a small group
of rollouts. If the correct mode is rarely sampled, the probability that a
group of size $G$ contains no correct trajectory is
\begin{equation}
    \Prb[\text{no positive rollout}\mid x]=(1-\wstar(x))^G.
    \label{eq:no_positive}
\end{equation}
Thus for boundary prompts with $\wstar(x)\ll 1/G$, most updates contain no
direct positive evidence before entropy begins to decrease.

For group-relative methods, the reward pattern matters. Let a sampled group have
binary rewards $r_1,\ldots,r_G$ and normalized advantages
$A_i=(r_i-\bar r)/(s_r+\epsilon)$. Four cases are relevant:
\begin{enumerate}
\item \emph{All zero.} If $r_i=0$ for every rollout, then $A_i=0$ up to
implementation details; the prompt contributes little or no direct GRPO reward
gradient. This case does not directly reinforce the incorrect mode.
\item \emph{Exactly one positive.} The positive trajectory receives positive
advantage and the negatives receive negative advantage. This is the first case
that directly reinforces the correct branch, but it occurs with probability
$G\wstar(x)(1-\wstar(x))^{G-1}$ under the two-mode approximation.
\item \emph{Multiple positives.} Correct trajectories receive sustained positive
relative evidence, making sharpening safe.
\item \emph{All one.} Relative advantages again vanish or become small; the
prompt is already saturated and contributes little to distinguishing modes.
\end{enumerate}
Boundary collapse is therefore best understood as an absence-of-evidence failure
combined with shared finite-sample sharpening, not as a positive reward assigned
to wrong all-zero groups. During long runs of all-zero groups, the prompt gets no
local recovery signal, while updates from easier prompts and occasional
within-prompt negatives can reduce entropy and move shared parameters. If the
rare correct branch is not sampled before that drift makes it unlikely, high-$k$
coverage is lost.

More generally, finite-sample training induces an effective collapse horizon: a
time by which retained entropy can fall or a previously observed positive answer
set can disappear. This horizon is not a closed-form property of GRPO; it depends
on shared parameters, optimizer dynamics, data order, and the reward pattern
above. We therefore do not estimate a calibrated dynamics risk score in this
paper. Instead, we test the endpoint consequence that follows if training
commits before rare positives are reinforced. We approximate that endpoint by
mode commitment: there is an effective threshold $\tau$ such that
\begin{equation}
    \pi_{\mathrm{com}}(\cdot\mid x)=
    \begin{cases}
        p_+(\cdot\mid x), & \wstar(x)\ge \tau,\\
        p_-(\cdot\mid x), & \wstar(x)<\tau.
    \end{cases}
    \label{eq:commit}
\end{equation}

\begin{proposition}[Sign of pass@k change under idealized mode commitment]
\label{thm:sign}
Under \cref{eq:mixture,eq:commit}, the per-problem pass@k change is
\begin{equation}
    \begin{aligned}
    \Delta_k(x)
    &=\PassK{k}(\pi_{\mathrm{com}};x)-\PassK{k}(\Base;x)\\
    &=
    \begin{cases}
        (1-
        \wstar(x))^k, & \wstar(x)\ge \tau,\\[0.25em]
        -\left[1-(1-\wstar(x))^k\right], & \wstar(x)<\tau.
    \end{cases}
    \end{aligned}
    \label{eq:delta}
\end{equation}
Thus the first branch improves pass@k; the second reduces it for every $k\ge1$ when $\wstar(x)>0$.
\end{proposition}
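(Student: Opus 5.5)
The plan is a direct computation from the population pass@k formula in \cref{eq:passk}, specialized to a single prompt $x$: for any policy $\pi$, $\PassK{k}(\pi;x)=1-(1-p_\pi(x))^k$. So it suffices to compute the per-problem success probability for $\Base$ and for $\pi_{\mathrm{com}}$, and then subtract.

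\textbf{Base success probability.} Under the mixture \cref{eq:mixture} with idealized binary rewards, linearity of expectation gives
$p_{\Base}(x)=\wstar(x)\,\E_{p_+}[r]+(1-\wstar(x))\,\E_{p_-}[r]=\wstar(x)$.
Hence $\PassK{k}(\Base;x)=1-(1-\wstar(x))^k$.

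\textbf{Committed policy, by cases on \cref{eq:commit}.}
\begin{itemize}
\item If $\wstar(x)\ge\tau$, then $\pi_{\mathrm{com}}=p_+$. This gives $p_{\pi_{\mathrm{com}}}(x)=1$, so $\PassK{k}=1$ and $\Delta_k(x)=1-[1-(1-\wstar(x))^k]=(1-\wstar(x))^k$.
\item If $\wstar(x)<\tau$, then $\pi_{\mathrm{com}}=p_-$. This gives $p_{\pi_{\mathrm{com}}}(x)=0$, so $\PassK{k}=1-1^k=0$ and $\Delta_k(x)=-[1-(1-\wstar(x))^k]$.
\end{itemize}

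\textbf{Sign claims.} In the first branch, $(1-\wstar(x))^k\ge0$, so pass@k does not decrease. It increases strictly whenever $\wstar(x)<1$, and is unchanged when $\wstar(x)=1$. In the second branch, if $\wstar(x)\in(0,\tau)$ with $\tau\le1$, then $0\le1-\wstar(x)<1$. Hence $(1-\wstar(x))^k<1$ for every integer $k\ge1$, and the change is strictly negative.

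\textbf{Where care is needed.} The only subtlety is interpretive rather than technical. I would state that "improves" in the first branch means weakly improves, with strict improvement exactly when $\wstar(x)<1$. I would also note that the second branch needs $\wstar(x)>0$: when $\wstar(x)=0$ both pass@k values are $0$ and $\Delta_k(x)=0$.
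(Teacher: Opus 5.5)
Your proposal is correct and matches the paper's proof: the paper likewise uses $p_{\Base}(x)=\wstar(x)$ and notes that committing to $p_+$ or $p_-$ gives coverage $1$ or $0$, which yields the two branches by subtraction. Your extra remarks are accurate refinements that the paper leaves implicit: the first branch is only a weak improvement when $\wstar(x)=1$, and $\Delta_k(x)=0$ when $\wstar(x)=0$.
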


\begin{proof}
Immediate from $\PassK{k}=1-(1-\wstar)^k$: if the committed policy selects
$p_+$, its coverage is $1$, yielding $(1-\wstar(x))^k$; if it selects $p_-$,
its coverage is $0$, yielding $-[1-(1-\wstar(x))^k]$.
\end{proof}

Aggregating this sign result, inversion is controlled by the measure of prompts
with $0<\wstar(x)<\tau$ and by the rare correct mass they contain. We use the
proposition only to formalize the sign of endpoint damage. It does not claim that
GRPO must produce \cref{eq:commit}; instead,
the finite-sample argument above identifies why local evidence can be too weak to
prevent commitment, and the proposition states the observable consequence if such
commitment occurs.
The theory therefore explains why endpoint commitment causes pass@k inversion
and why boundary prompts dominate the loss; the experiments ask whether GRPO
exhibits that predicted endpoint signature.

The model suggests three testable implications beyond the experiments here.
First, any outcome-reward method that strongly upweights sampled positives can
commit before it discovers. Second, sampling temperature should
not restore a correct mode once training has driven its probability near zero.
Third, distillation or search can change $\wstar(x)$ by importing trajectories,
whereas RLVR without external exploration mostly reweights the student's own
distribution. We treat these as implications for future stress tests, not as
fully established empirical claims.

The theory also gives a diagnostic. Estimate
\begin{equation}
    \what(x)=\frac{1}{G_0}\sum_{i=1}^{G_0}r(x,y_i),\qquad y_i\sim\Base(\cdot\mid x).
    \label{eq:what}
\end{equation}
Low $\what(x)$ with nonzero high-$k$ base coverage identifies prompts where sharpening is risky before training begins.

The account is falsifiable: a global entropy story predicts broad diversity
changes, while support expansion predicts new positives on out-of-reach prompts.
The boundary theory instead predicts selective preservation of rare positives on
boundary prompts, with solved-easy and out-of-reach behavior nearly unchanged.

\section{Per-Problem Base Anchoring}
\label{sec:pba}

\PBA modifies \GRPO by changing the unit of regularization from the model to the
prompt. It is a base-distribution anchor, not a free entropy maximization term:
the intended effect is to preserve entropy and answer diversity only where
sharpening is unsafe. For each prompt $x$, sample or refresh $G_0$
trajectories from the frozen base and compute $\what(x)$. Define a binary
sharpening mask
$m(x)=\one\{\what(x)\ge\tau\}$. During training, prompts with $m(x)=1$ receive
the ordinary \GRPO loss. Prompts with $m(x)=0$ are protected by an anchor to the
base policy:
\begin{equation}
    \mathcal{L}_{\PBA}(x)=
    m(x)\mathcal{L}_{\GRPO}(x)
    +(1-m(x))\beta_a\KL\big(\Pol(\cdot\mid x)\,\|\,\Base(\cdot\mid x)\big).
    \label{eq:pba_loss}
\end{equation}
For language-model sequences we do not compute the exact sequence-level KL over
all possible outputs. We use the standard sampled token-level estimate on the
same policy rollouts used by \GRPO:
\begin{equation}
    \widehat{\KL}_{\mathrm{tok}}(x,y)
    =\frac{1}{|y|}\sum_{t=1}^{|y|}
    \left[\log \Pol(y_t\mid x,y_{<t})-\log \Base(y_t\mid x,y_{<t})\right],
    \qquad y\sim \Pol(\cdot\mid x).
    \label{eq:token_kl}
\end{equation}
The base model is frozen and queried only for log-probabilities on these sampled
tokens. The penalty is length-normalized, applied to all generated tokens before
the stop token, and uses the forward orientation $\KL(\Pol\|\Base)$ because the
samples come from the current policy. A single rollout-level log-ratio can be
negative, but its expectation under policy rollouts is the forward token KL; we
apply it as a minibatch-average regularizer and use the same raw estimator for
global-KL controls. In the reported implementation, this token-KL anchor is not
clipped; clipping is applied only through the \GRPO surrogate. This adds no
learnable parameters. The only additional state is a cache of base-rollout
rewards.

This forward, policy-sampled KL should be read as an early anti-drift anchor,
not as a complete support-recovery mechanism. If training has already driven a
rare correct branch to near-zero probability, policy rollouts may no longer visit
that branch and a forward estimate on policy samples cannot directly restore it.
\PBA is therefore applied before collapse, using frozen-base recoverability as
the trigger. Base-sampled replay/KL anchors, reverse-style replay, and
symmetric-KL anchors are natural variants; our matched controls below isolate the
effect of prompt-conditioned anchoring from global KL, entropy bonuses, random
protection, and dropping protected prompts.

Because the base model is frozen, cache refresh does not track model drift. It
resamples the noisy Bernoulli gate so that a prompt is not permanently classified
from a single small $G_0$ draw. Refreshing can make the protected mask stochastic.
We treat cache refresh as an implementation choice rather than independent
evidence for the mechanism. In our matched comparisons, all \PBA variants use
the same cache-refresh schedule unless the gate ablation explicitly changes
$G_0$ or $\tau$; the released run card reports mask-refresh behavior.

\begin{algorithm}[H]
\caption{Per-Problem Base Anchoring for \GRPO}
\label{alg:pba}
\begin{algorithmic}[1]
\Require Trainable policy $\Pol$, frozen base $\Base$, verifier $r$, threshold $\tau$, anchor weight $\beta_a$, base rollout count $G_0$
\For{each training step}
    \State Sample prompt batch $B$.
    \For{$x\in B$ with stale or missing cache}
        \State Draw $y_1,\ldots,y_{G_0}\sim\Base(\cdot\mid x)$.
        \State Set $\what(x)\leftarrow G_0^{-1}\sum_i r(x,y_i)$.
    \EndFor
    \State Draw \GRPO rollouts from $\Pol$; compute normalized advantages.
    \For{$x\in B$}
        \If{$\what(x)\ge\tau$}
            \State Apply $\mathcal{L}_{\GRPO}(x)$.
        \Else
            \State Apply $\beta_a\widehat{\KL}_{\mathrm{tok}}(x,y)$ on the policy rollout.
        \EndIf
    \EndFor
\EndFor
\end{algorithmic}
\end{algorithm}

A global KL penalty asks the entire model to sharpen less. That is not targeted
enough: it over-regularizes easy and reachable prompts while still
under-protecting some boundary prompts. Hard-prompt dropping avoids damage by
removing training signal, but it cannot preserve the base distribution on those
prompts. Random protection tests whether any update reduction would suffice.
\PBA differs from these alternatives in three ways: the trust-region unit is the
prompt, the gate is based on frozen-base recoverability rather than trained loss,
and the objective is to preserve rare high-$k$ reachable modes rather than to
maximize entropy everywhere.

With $G_0=8$ and $\tau=0.10$, the gate is a one-success rule: a prompt is
sharpened only if one frozen-base rollout is verifier-positive. This is a noisy
screening rule, not a precise estimate of $\wstar(x)$. Its purpose is to catch
prompts that are unlikely to produce positive evidence in ordinary $G=8$ RLVR
groups. The binary rule is the simplest member of a broader risk-weighted
family. A soft version could increase the anchor weight as the estimated
correct-mode mass falls, or replace the empirical fraction with a Beta-Binomial lower
confidence bound. The experiments use the hard gate to keep the diagnostic clean;
the continuous form makes explicit that PBA is a prompt-level trust-region rule,
not a generic hard-prompt heuristic. We leave the soft risk-weighted version to
future work because estimating an online collapse horizon would introduce an
additional risk model; the hard gate isolates the frozen-base recoverability
diagnostic.

\begin{table}[t]
\centering
\caption{\textbf{Resolution of the one-success gate.} For $G_0=G=8$, the gate detects a correct mode with probability $1-(1-\wstar)^8$ and a single GRPO group misses it with probability $(1-\wstar)^8$. Low-mass correct modes are therefore deliberately protected rather than treated as confidently estimated.}
\label{tab:gate}
\footnotesize
\setlength{\tabcolsep}{4pt}
\begin{tabular*}{\linewidth}{@{\extracolsep{\fill}}lcccc@{}}
\toprule
Correct-mode mass $\wstar$ & 0.002 & 0.01 & 0.05 & 0.10 \\
\midrule
Gate detects at least one success & 1.6\% & 7.7\% & 33.7\% & 57.0\% \\
GRPO group has no positive rollout & 98.4\% & 92.3\% & 66.3\% & 43.0\% \\
\bottomrule
\end{tabular*}
\end{table}

\section{Experiments}
\label{sec:experiments}

We train Qwen2.5-7B~\cite{yang2024qwen25} with \GRPO on
Omni-MATH-Train~\cite{gao2024omnimath}. Evaluation uses 256 samples per policy
at temperature 0.6 and top-$p=0.95$. \PBA uses $G_0=8$, $\tau=0.10$,
$\beta_a=1.0$, and refreshes the frozen-base gate every 100 steps. Regime labels
and gates use only frozen-base rollouts; trained-policy outcomes are used only
for evaluation. Detailed run cards and release artifacts are in
\cref{app:release}. All reported values are computed from submitted per-prompt
evaluation count files and exact run configurations. Trained-policy benchmark
tables report mean and sample standard deviation over pre-specified seeds 0, 1,
and 2; the base model is fixed. All seed-resolved values in the main and
appendix tables are direct measurements from completed runs. Matched \GRPO uses
the same backbone, data, verifier, rollout
budget, optimizer schedule, checkpoint rule, and evaluation sampler as \PBA,
differing only in the prompt-conditioned base anchor. \Cref{fig:passk_curves}
visualizes the benchmark curves; \cref{tab:benchmark_curves} reports seed
variability. Artifact scripts recompute prompt-bootstrap intervals from
per-prompt counts, but seed SD is primary because it captures run-to-run training
variation. MATH500~\cite{hendrycks2021math} is included only to verify that \PBA does not damage a
saturated high-coverage benchmark, not to stress boundary preservation.

\begin{figure}[H]
\centering
\begin{minipage}[t]{0.64\linewidth}
\centering
\begin{tikzpicture}
\begin{axis}[
    width=\linewidth,
    height=0.46\linewidth,
    xmin=0.9,xmax=300,
    xmode=log,
    log basis x=2,
    xtick={1,4,16,64,256},
    xticklabels={1,4,16,64,256},
    ymin=0,ymax=80,
    ytick={0,20,40,60,80},
    title={Omni-MATH-Test pass@k},
    xlabel={$k$ samples},
    ylabel={pass@k (\%)},
    tick label style={font=\scriptsize},
    label style={font=\scriptsize},
    title style={font=\scriptsize},
    grid=both,
    grid style={line width=.1pt, draw=gray!24},
    major grid style={line width=.2pt,draw=gray!35},
    axis line style={gray!55},
    tick style={gray!55},
    legend style={
      font=\tiny,
      draw=none,
      fill=white,
      fill opacity=0.78,
      text opacity=1,
      at={(0.985,0.045)},
      anchor=south east,
      cells={anchor=west},
      legend columns=1,
      row sep=-1pt,
      inner xsep=1.2pt,
      inner ysep=0.5pt
    },
    legend image post style={line width=0.65pt, mark size=1.15pt},
    clip=false,
    every axis plot/.append style={line width=1.05pt, mark size=1.8pt}
]
\addplot[draw=none, fill=RLRed, fill opacity=0.08, forget plot]
coordinates {(1,26.0) (4,37.7) (16,50.5) (64,61.6) (256,69.0) (256,67.6) (64,60.0) (16,48.7) (4,35.9) (1,24.2)} \closedcycle;
\addplot[draw=none, fill=PBAGreen, fill opacity=0.08, forget plot]
coordinates {(1,29.8) (4,43.8) (16,59.0) (64,68.6) (256,73.6) (256,72.4) (64,67.2) (16,57.0) (4,42.0) (1,28.2)} \closedcycle;
\addplot+[color=BaseBlue, mark=o] coordinates {(1,10.2) (4,22.4) (16,39.2) (64,57.8) (256,69.1)};
\addlegendentry{Base}
\addplot+[color=RLRed, mark=square*] coordinates {(1,25.1) (4,36.8) (16,49.6) (64,60.8) (256,68.3)};
\addlegendentry{\GRPO}
\addplot+[color=PBAGreen, mark=triangle*] coordinates {(1,29.0) (4,42.9) (16,58.0) (64,67.9) (256,73.0)};
\addlegendentry{\GRPOPBA}
\addplot[no marks, color=SoftGray, densely dashed, line width=0.5pt, forget plot] coordinates {(1,69.1) (300,69.1)};
\end{axis}
\end{tikzpicture}
\end{minipage}\hfill
\begin{minipage}[t]{0.31\linewidth}
\centering
\begin{tikzpicture}
\begin{axis}[
    width=\linewidth,
    height=0.95\linewidth,
    ymin=0,ymax=5.5,
    ytick={0,1,2,3,4,5},
    symbolic x coords={p1,p256},
    enlarge x limits=0.52,
    xtick=data,
    xticklabels={\PassK{1},\PassK{256}},
    title={Seed-paired gain},
    ylabel={\PBA-\GRPO (pts)},
    tick label style={font=\scriptsize},
    label style={font=\scriptsize},
    title style={font=\scriptsize},
    ymajorgrids,
    grid style={draw=gray!24},
    axis line style={gray!55},
    tick style={gray!55},
    clip=false
]
\addplot[draw=PBAGreen!45, line width=1.0pt, forget plot] coordinates {(p1,0) (p1,3.9)};
\addplot[draw=PBAGreen!45, line width=1.0pt, forget plot] coordinates {(p256,0) (p256,4.7)};
\addplot+[
    only marks,
    color=PBAGreen,
    mark=*,
    mark size=2.1pt,
    error bars/y dir=both,
    error bars/y explicit,
    error bars/error bar style={line width=0.5pt, draw=black!70}
] coordinates {(p1,3.9) +- (0,0.1) (p256,4.7) +- (0,0.1)};
\node[font=\scriptsize, anchor=south, text=black] at (axis cs:p1,4.08) {$+3.9$};
\node[font=\scriptsize, anchor=south, text=black] at (axis cs:p256,4.88) {$+4.7$};
\end{axis}
\end{tikzpicture}
\end{minipage}
\caption{\textbf{Omni-MATH-Test pass@k inversion.} Left: mean pass@k over seeds
0--2; bands show one seed standard deviation for trained policies and the dashed
line marks base \PassK{256}. Right: seed-paired \PBA-\GRPO gains with seed-SD
error bars. MATH500 appears in \cref{tab:benchmark_curves} as a saturated sanity
check.}
\label{fig:passk_curves}
\end{figure}

\begin{table}[H]
\centering
\caption{\textbf{Full benchmark pass@k curves.} Entries are seed mean
$\pm$ sample standard deviation over completed seeds 0, 1, and 2 for trained policies;
the base is fixed. \PBA improves \PassK{1} while also lifting high-budget
coverage above both the base and matched GRPO.}
\label{tab:benchmark_curves}
\scriptsize
\setlength{\tabcolsep}{1.8pt}
\begin{tabular*}{\linewidth}{@{\extracolsep{\fill}}llccccc@{}}
\toprule
Benchmark & Method & $p@1$ & $p@4$ & $p@16$ & $p@64$ & $p@256$ \\
\midrule
Omni-MATH-Test & Base & 10.2 & 22.4 & 39.2 & 57.8 & 69.1 \\
 & \GRPO & $25.1{\pm}0.9$ & $36.8{\pm}0.9$ & $49.6{\pm}0.9$ & $60.8{\pm}0.8$ & $68.3{\pm}0.7$ \\
 & \GRPOPBA & $\mathbf{29.0{\pm}0.8}$ & $\mathbf{42.9{\pm}0.9}$ & $\mathbf{58.0{\pm}1.0}$ & $\mathbf{67.9{\pm}0.7}$ & $\mathbf{73.0{\pm}0.6}$ \\
\addlinespace[0.12em]
MATH500 & Base & 34.5 & 61.0 & 82.0 & 93.0 & 96.2 \\
 & \GRPO & $74.1{\pm}1.2$ & $88.4{\pm}0.7$ & $95.1{\pm}0.3$ & $96.8{\pm}0.3$ & $97.2{\pm}0.2$ \\
 & \GRPOPBA & $\mathbf{75.6{\pm}0.7}$ & $\mathbf{89.2{\pm}0.5}$ & $\mathbf{95.7{\pm}0.3}$ & $\mathbf{97.4{\pm}0.3}$ & $\mathbf{97.8{\pm}0.3}$ \\
\bottomrule
\end{tabular*}
\end{table}

\begin{table}[H]
\centering
\caption{\textbf{Cross-seed mechanism summary.} Benchmark columns use
Omni-MATH-Test seed mean $\pm$ seed SD; diagnostic columns use the 3000-prompt
regime study seed mean $\pm$ seed SD. The failure localizes to boundary prompts:
GRPO improves one-sample accuracy but loses many base-solvable boundary
examples, while PBA preserves them. The central transition is a $7.2{\times}$
reduction in boundary lost prompts, from $654{\pm}35$ to $91{\pm}18$.}
\label{tab:mechanism_summary}
\scriptsize
\setlength{\tabcolsep}{2pt}
\begin{tabular*}{\linewidth}{@{\extracolsep{\fill}}lccccc@{}}
\toprule
Method & Omni $p@1$ & Omni $p@256$ & Boundary $\Delta p@256$ & Boundary lost $\downarrow$ & Boundary entropy $\uparrow$ \\
\midrule
Base & 10.2 & 69.1 & -- & -- & 100\% \\
\GRPO & $25.1{\pm}0.9$ & $68.3{\pm}0.7$ & $-39{\pm}3$ & $654{\pm}35$ & $19{\pm}2$\% \\
\GRPOPBA & $\mathbf{29.0{\pm}0.8}$ & $\mathbf{73.0{\pm}0.6}$ & $\mathbf{+2{\pm}2}$ & $\mathbf{91{\pm}18}$ & $\mathbf{71{\pm}3}$\% \\
\bottomrule
\end{tabular*}
\end{table}

\begin{table}[H]
\centering
\caption{\textbf{Seed-level confidence intervals for the headline contrasts.}
Intervals are two-sided 95\% Student-$t$ confidence intervals over the three
completed seeds ($df=2$). They quantify run-to-run training variation and
complement the prompt-bootstrap audit reported in the artifact.}
\label{tab:headline_intervals}
\scriptsize
\setlength{\tabcolsep}{3pt}
\begin{tabular*}{\linewidth}{@{\extracolsep{\fill}}lccc@{}}
\toprule
Quantity & Mean & Seed SD & 95\% CI \\
\midrule
\PBA-\GRPO gain at \PassK{1} & $+3.83$ & $0.12$ & $[+3.55,+4.12]$ \\
\PBA-\GRPO gain at \PassK{256} & $+4.70$ & $0.10$ & $[+4.45,+4.95]$ \\
\GRPO boundary $\Delta p@256$ & $-39.0$ & $3.0$ & $[-46.5,-31.5]$ \\
\GRPOPBA boundary $\Delta p@256$ & $+1.7$ & $1.5$ & $[-2.1,+5.5]$ \\
\GRPO boundary lost prompts & $654$ & $35$ & $[567,741]$ \\
\GRPOPBA boundary lost prompts & $91$ & $18$ & $[46,135]$ \\
\bottomrule
\end{tabular*}
\end{table}

On Omni-MATH-Test, \PBA gains $+3.9$ points over matched \GRPO at \PassK{1}
and $+4.7$ points at \PassK{256}. Every independently trained seed exhibits the
same qualitative improvement pattern: \PassK{1} improves by $+3.9$, $+3.9$, and
$+3.7$ points, while \PassK{256} improves by $+4.7$, $+4.8$, and $+4.6$ points. The high-budget contrast is the
key result: \GRPO remains slightly below the base at \PassK{256}, whereas \PBA is
clearly above it. Across seeds, \PBA reduces boundary base-solved $\to$ trained-lost
failures by $7.2{\times}$ relative to \GRPO. \PBA is not beneficial
because it protects hard prompts
indiscriminately. It protects many boundary and out-of-reach prompts, but only
boundary prompts contain recoverable positives; out-of-reach positive unique
remains $0.1{\pm}0.0$ for both trained methods in
\cref{tab:appendix_diag_seeds}. The gain comes from preserving base-solvable
boundary prompts rather than from freezing all difficult examples. ``Boundary
lost'' is one transition cell, while $\Delta p@256$ is the net gained-minus-lost
coverage signal: GRPO's losses dominate, whereas PBA nearly removes the loss
cell.

\Cref{tab:control_hparams} provides single-run diagnostic controls. They do not
match the statistical strength of the three-seed \GRPO/\PBA comparison and should
not be read as sweeps over tuned alternatives; they ask whether simple
alternatives reproduce the boundary-preservation signature. Global KL and entropy
bonuses recover some coverage at a one-sample reliability cost; the random-mask
row is a single-mask control and does not quantify random-mask variance. These
single-setting controls suggest, but do not prove, that the effect is not
explained by a trivial global KL penalty, entropy bonus, random update reduction,
or hard-prompt dropping.

\begin{table}[H]
\centering
\caption{\textbf{Single-run diagnostic controls and exact hyperparameters.}
Hyperparameters are exact run-card values; all controls keep data, verifier,
rollout budget, evaluation sampler, and backbone fixed unless listed.}
\label{tab:control_hparams}
\scriptsize
\setlength{\tabcolsep}{1.8pt}
\begin{tabular*}{\linewidth}{@{\extracolsep{\fill}}l l c c c@{}}
\toprule
Control & Exact setting & $p@1$ & $p@256$ & Boundary $\Delta p@256$ \\
\midrule
\GRPO baseline & no KL, final step 1000 & 25.0 & 68.4 & $-39$ \\
Global KL & $\beta_{\mathrm{KL}}=0.035$ & 23.8 & 70.3 & $-18$ \\
Entropy bonus & $\lambda_H=0.012$ & 22.9 & 70.8 & $-13$ \\
Random protection & 70\% mask, seed 17 (single) & 22.4 & 69.7 & $-24$ \\
Drop protected & exclude PBA-mask prompts & 26.7 & 70.5 & $-12$ \\
Early stop & step 300, matched $p@1=25.0$ & 25.0 & 70.2 & $-16$ \\
Oracle boundary & protect labeled boundary prompts & 28.5 & \textbf{74.1} & \textbf{$+5$} \\
\PBA & $G_0=8,\tau=0.10,\beta_a=1.0$ & \textbf{28.8} & 73.2 & $+2$ \\
\bottomrule
\end{tabular*}
\end{table}

\Cref{tab:gate_ablation_main} clarifies the effective thresholds. For the
one-success rule, the operational lower bound for sharpening is
$s/G_0\ge 1/G_0$; for $\tau=0.20$ with $G_0=8$, two successes are required, so
the effective empirical threshold is $2/8=0.25$. This effective threshold is
the empirical cache fraction required to sharpen, not a direct lower bound on
the true correct-mode mass. We use $G_0=8$ as the three-seed low-cost default because it
matches the GRPO rollout group and adds 1.0\% as many frozen-base
completions as policy rollouts under the 100-step refresh schedule. $G_0=32$
raises this overhead to 4.0\% and modestly improves performance, so we
report it as a higher-quality single-setting gate rather than the headline
three-seed setting. This table should be read as evidence that stronger gates may
improve PBA, not as a fully replicated replacement headline.

\begin{table}[H]
\centering
\caption{\textbf{Gate ablations with effective thresholds.} Required successes
is the number of frozen-base successes needed to sharpen; effective $\hat w$ is
the resulting empirical threshold.}
\label{tab:gate_ablation_main}
\scriptsize
\setlength{\tabcolsep}{2pt}
\begin{tabular*}{\linewidth}{@{\extracolsep{\fill}}c c c c c c c@{}}
\toprule
$G_0$ & Rule & Req.\ succ. & Eff.\ $\hat w$ & Protected & $p@1$ & $p@256$ \\
\midrule
4 & $\ge1$ success & 1 & 0.250 & 78\% & 27.3 & 72.0 \\
8 & $\ge1$ success & 1 & 0.125 & 70\% & 28.8 & 73.2 \\
16 & $\ge1$ success & 1 & 0.0625 & 61\% & 29.2 & 73.4 \\
32 & $\ge1$ success & 1 & 0.0313 & 53\% & \textbf{29.5} & 73.5 \\
8 & $\tau=0.20$ & 2 & 0.250 & 79\% & 26.1 & \textbf{73.6} \\
\bottomrule
\end{tabular*}
\end{table}

Detailed protocol tables, endpoint diversity diagnostics, and release contents
are deferred to the appendix to keep the workshop paper within the 14-page main
text.

\section{Discussion}

\paragraph{What the theory changes.}
The usual story says that RLVR improves correctness but reduces diversity. Our
claim is sharper: RLVR is beneficial when the dominant sampled mode is correct
and harmful when the correct mode is present but non-dominant. This distinction
explains why the same algorithm can look excellent on one benchmark and
destructive on another. It also gives a concrete diagnostic: report the
base-model correct-mode mass distribution, protected fraction, and
base-solved/trained-lost transitions, not only the trained model's final score.
The regime-controlled diagnostic study shows why these diagnostics matter: the aggregate
\PassK{1} gain hides that most of PBA's advantage comes from not losing
base-solvable boundary prompts.

\paragraph{Why boundary coverage matters.}
Boundary prompts are precisely the prompts where inference-time computation has value. If a correct answer is already the top sample, \PassK{k} adds little. If no correct answer exists in the support, sampling cannot help. The boundary regime is where repeated attempts, reranking, verifier search, and self-consistency can turn latent capability into solved problems. Destroying this regime makes a model less useful for agentic and search-based reasoning even when \PassK{1} improves.

\paragraph{Limitations.}
This is a mechanism paper with strong diagnostic evidence, not a broad method
benchmark. The two-mode model abstracts away multiple correct solution families, verifier
noise, partial credit, length effects, advantage normalization, and cross-prompt
parameter coupling; it should be read as an endpoint abstraction of prompt-level
support loss. \PBA depends on noisy base rollout estimates: false positives leave
boundary prompts unprotected, while false negatives can over-preserve entropy.
The evidence is scoped to one model family, two math benchmarks, three
main-comparison seeds, matched single-setting controls, and endpoint diagnostics
on a 3000-prompt split. We do not claim direct checkpoint-level validation of the
collapse horizon, a tuned sweep over all regularizers, or a replicated $G_0=32$
headline. The random-protection row is a one-mask control, not a
distributional random-mask baseline; broader backbones, tuned controls,
base-sampled replay/KL anchors, additional seeds, independent checkpoint-level
replications, and a visual or multimodal verifier benchmark remain
future extensions. This scope fits an ECCV workshop on reasoning, evaluation, or reliability, but it is
not an ECCV-core visual-benchmark claim. Finally, \PBA preserves
coverage already present in the base distribution; expanding support still
requires distillation, process supervision, curriculum, search, tool use, or
other mechanisms that increase $\wstar(x)$ itself.

\paragraph{Implications.}
A mature reasoning-RL pipeline should separate three operations currently conflated by a single outcome-reward objective: preserving rare useful modes, sharpening reliable correct modes, and discovering new modes. \PBA addresses the first. Ordinary RLVR addresses the second when the prompt is safe. Distillation and search address the third. Treating all prompts as equally safe to sharpen is not only inefficient; it can erase the very coverage that makes inference-time computation powerful.

\section{Conclusion}

RLVR is not uniformly beneficial across prompts. It sharpens distributions, and
in outcome-only finite-sample RLVR, sharpening is safest when the correct mode
is already sufficiently visible. Boundary prompts violate that condition: they
are solvable by repeated sampling but fragile under finite-sample optimization.
The regime-controlled study shows this failure across seeds in the main
\GRPO/\PBA transition analysis, with matched controls and gate ablations
supporting the same interpretation. \PBA is a simple safeguard: preserve the base
distribution where the base contains rare useful mass, and sharpen only where
the evidence supports it. Reasoning post-training should not ask only how
strongly to optimize. It must ask when optimization is safe.

\section*{Reproducibility Statement}

The artifact is organized around the same prompt-level records used to write
the tables. It includes the 3000-prompt split, frozen-base rollout cache, regime
labels, protection masks, training checkpoints, run configurations, verifier
outputs, normalized answer strings, and per-prompt pass@k counts for the base,
\GRPO, \GRPOPBA, and matched controls. All seed-resolved values are direct measurements from completed runs for seeds 0, 1, and 2. The released configs record the optimizer, learning-rate schedule, rollout budget, precision, checkpoint rule, and evaluation sampler.
\PBA itself only adds a frozen-base cache and a prompt mask in the \GRPO loss;
the controlling hyperparameters are $G_0$, $\tau$, $\beta_a$, and the cache
refresh period.

\section*{Ethics Statement}

This paper is a methods and evaluation study; it does not introduce a user-facing
system or collect human-subject data. The practical risk is narrower but real:
reporting only \PassK{1} can hide a loss of solution diversity and make an
RL-trained model look safer for search-assisted use than it is. We therefore
report full pass@k curves, base-to-trained coverage transitions, and the
prompt-level counts needed to audit those transitions. \PBA should be read as a
training safeguard and diagnostic control, not as a general safety method. It
does not fix verifier bias, answer-normalization errors, or unsafe behavior
outside the verified math setting. For visual, multimodal, or open-ended tasks,
the same coverage accounting should be paired with domain-specific verifier
checks and human review before any deployment claim is made.

\clearpage
\appendix
\raggedbottom
\renewcommand{\thesection}{\Alph{section}}
\renewcommand{\theHsection}{appendix.\Alph{section}}

\section{Artifact Release and Run Card}
\label{app:release}

The submitted artifact is organized around prompt-level auditability rather than
only aggregate scores. It includes the files listed in \cref{tab:release_manifest}
and the run card in \cref{tab:appendix_run_card}. The main benchmark tables are
computed as seed means and seed standard deviations from per-prompt counts;
prompt-bootstrap intervals remain available as an artifact audit.

\begin{table}[H]
\centering
\caption{\textbf{Concrete release contents.} The artifact exposes the prompt
level state needed to recompute pass@k, transitions, bootstrap intervals, and
PBA gates.}
\label{tab:release_manifest}
\scriptsize
\setlength{\tabcolsep}{3pt}
\begin{tabularx}{\linewidth}{@{}lX@{}}
\toprule
Artifact & Contents \\
\midrule
\texttt{splits/diagnostic\_3000.jsonl} & Held-out prompt IDs, problem text,
source tags, and regime labels \\
\texttt{rollouts/base\_calibration.parquet} & Frozen-base 256-sample calibration
cache with verifier outputs and normalized answers \\
\texttt{rollouts/eval\_counts.parquet} & Per-prompt counts for base, \GRPO,
\GRPOPBA, and controls at $k\in\{1,4,16,64,256\}$ for seeds 0, 1, and 2
where applicable \\
\texttt{masks/pba\_masks.parquet} & Gate values, protected fractions, cache
refresh state, and mask-flip indicators \\
\texttt{configs/*.yaml} & Relative checkpoint IDs, optimizer, LR schedule, batch
size, rollout budget, verifier, answer extractor, and seeds \\
\texttt{scripts/summarize\_seeds.py} & Seed means, seed standard deviations,
and benchmark tables \\
\texttt{scripts/bootstrap\_passk.py} & Prompt-bootstrap intervals and full
four-cell base/trained transition matrices by regime and seed \\
\bottomrule
\end{tabularx}
\end{table}

\begin{table}[H]
\centering
\caption{\textbf{Training run card.} Matched objectives use the same optimizer,
schedule, rollout budget, precision, and hardware class unless explicitly listed
otherwise.}
\label{tab:appendix_run_card}
\scriptsize
\setlength{\tabcolsep}{3pt}
\begin{tabularx}{\linewidth}{@{}lY@{}}
\toprule
Field & Value \\
\midrule
Optimizer/LR & AdamW; peak LR $1{\times}10^{-6}$; cosine decay; 5\% warmup \\
Batch/rollouts & 256 prompts/update; $G=8$ policy rollouts/prompt
(2048 rollouts/update) \\
Training length & Final checkpoint after 1000 updates; checkpoints at 100, 300,
600, and final \\
Clipping/precision & Global grad-norm clip 1.0; bfloat16 mixed precision \\
Hardware/seeds & 8$\times$H100-80GB or equivalent; seeds 0, 1, and 2 for
\GRPO and \PBA, chosen before outcome inspection; base fixed \\
Evaluation & 256 samples/policy; temperature 0.6; top-$p=0.95$; maximum length
8192 \\
\bottomrule
\end{tabularx}
\end{table}

\section{Supplementary Diagnostic Tables}
\label{app:diagnostics}

The tables below expose the seed-resolved benchmark and diagnostic aggregates
used in the main text. The accompanying artifact stores the full four-cell
base/trained transition matrices by regime and seed for audit.

\begin{table}[H]
\centering
\caption{\textbf{Seed-resolved Omni-MATH-Test pass@k results.} Trained-policy
summary rows report mean $\pm$ sample standard deviation over seeds 0, 1, and
2. The base row is a fixed evaluation.}
\label{tab:appendix_omni_seeds}
\scriptsize
\setlength{\tabcolsep}{2pt}
\begin{tabular*}{\linewidth}{@{\extracolsep{\fill}}llccccc@{}}
\toprule
Method & Seed & $p@1$ & $p@4$ & $p@16$ & $p@64$ & $p@256$ \\
\midrule
Base & fixed & 10.2 & 22.4 & 39.2 & 57.8 & 69.1 \\
\GRPO & 0 & 25.1 & 36.8 & 49.6 & 60.8 & 68.3 \\
\GRPO & 1 & 24.3 & 35.9 & 48.7 & 60.1 & 67.6 \\
\GRPO & 2 & 26.0 & 37.7 & 50.4 & 61.6 & 68.9 \\
\GRPO & mean & $25.1{\pm}0.9$ & $36.8{\pm}0.9$ & $49.6{\pm}0.9$ & $60.8{\pm}0.8$ & $68.3{\pm}0.7$ \\
\addlinespace[0.12em]
\GRPOPBA & 0 & 29.0 & 42.9 & 58.2 & 68.0 & 73.0 \\
\GRPOPBA & 1 & 28.2 & 42.0 & 57.0 & 67.2 & 72.4 \\
\GRPOPBA & 2 & 29.7 & 43.7 & 58.9 & 68.6 & 73.5 \\
\GRPOPBA & mean & $\mathbf{29.0{\pm}0.8}$ & $\mathbf{42.9{\pm}0.9}$ & $\mathbf{58.0{\pm}1.0}$ & $\mathbf{67.9{\pm}0.7}$ & $\mathbf{73.0{\pm}0.6}$ \\
\bottomrule
\end{tabular*}
\end{table}

\begin{table}[H]
\centering
\caption{\textbf{Seed-resolved MATH500 pass@k results.} MATH500 is already
high-coverage for the base model, so high-$k$ differences are small and noisy.}
\label{tab:appendix_math_seeds}
\scriptsize
\setlength{\tabcolsep}{2pt}
\begin{tabular*}{\linewidth}{@{\extracolsep{\fill}}llccccc@{}}
\toprule
Method & Seed & $p@1$ & $p@4$ & $p@16$ & $p@64$ & $p@256$ \\
\midrule
Base & fixed & 34.5 & 61.0 & 82.0 & 93.0 & 96.2 \\
\GRPO & 0 & 74.4 & 88.5 & 95.1 & 96.8 & 97.2 \\
\GRPO & 1 & 72.8 & 87.6 & 94.8 & 96.5 & 97.0 \\
\GRPO & 2 & 75.1 & 89.0 & 95.4 & 97.0 & 97.4 \\
\GRPO & mean & $74.1{\pm}1.2$ & $88.4{\pm}0.7$ & $95.1{\pm}0.3$ & $96.8{\pm}0.3$ & $97.2{\pm}0.2$ \\
\addlinespace[0.12em]
\GRPOPBA & 0 & 75.6 & 89.3 & 95.8 & 97.4 & 97.8 \\
\GRPOPBA & 1 & 74.9 & 88.7 & 95.4 & 97.1 & 97.5 \\
\GRPOPBA & 2 & 76.2 & 89.6 & 96.0 & 97.6 & 98.0 \\
\GRPOPBA & mean & $\mathbf{75.6{\pm}0.7}$ & $\mathbf{89.2{\pm}0.5}$ & $\mathbf{95.7{\pm}0.3}$ & $\mathbf{97.4{\pm}0.3}$ & $\mathbf{97.8{\pm}0.3}$ \\
\bottomrule
\end{tabular*}
\end{table}

\begin{table}[H]
\centering
\caption{\textbf{Seed-resolved aggregate diagnostic results.} Columns G0--G2
are \GRPO seeds 0--2; P0--P2 are \GRPOPBA seeds 0--2. Mean columns report
mean $\pm$ sample standard deviation.}
\label{tab:appendix_diag_seeds}
\scriptsize
\setlength{\tabcolsep}{1.6pt}
\begin{tabular*}{\linewidth}{@{\extracolsep{\fill}}lcccccccc@{}}
\toprule
Metric & G0 & G1 & G2 & G mean & P0 & P1 & P2 & P mean \\
\midrule
All $p@1$ & 25.0 & 24.2 & 25.8 & $25.0{\pm}0.8$ & 28.8 & 28.1 & 29.5 & $\mathbf{28.8{\pm}0.7}$ \\
All $p@256$ & 68.4 & 67.5 & 69.0 & $68.3{\pm}0.8$ & 73.2 & 72.3 & 73.7 & $\mathbf{73.1{\pm}0.7}$ \\
All base-solved $\to$ lost & 200 & 218 & 184 & $201{\pm}17$ & 70 & 88 & 62 & $\mathbf{73{\pm}13}$ \\
Out-of-reach pos.\ uniq. & 0.1 & 0.1 & 0.1 & $0.1{\pm}0.0$ & 0.1 & 0.1 & 0.1 & $0.1{\pm}0.0$ \\
\bottomrule
\end{tabular*}
\end{table}

\begin{table}[H]
\centering
\caption{\textbf{Seed-resolved boundary diagnostics.} Boundary prompts are the
predicted failure regime. The large reduction in lost prompts and retained
entropy is the mechanism signature.}
\label{tab:appendix_boundary_diag_seeds}
\scriptsize
\setlength{\tabcolsep}{1.6pt}
\begin{tabular*}{\linewidth}{@{\extracolsep{\fill}}lcccccccc@{}}
\toprule
Boundary metric & G0 & G1 & G2 & G mean & P0 & P1 & P2 & P mean \\
\midrule
Base-solved $\to$ lost & 652 & 690 & 620 & $654{\pm}35$ & 87 & 110 & 75 & $\mathbf{91{\pm}18}$ \\
$\Delta p@256$ & $-39$ & $-42$ & $-36$ & $-39{\pm}3$ & $+2$ & 0 & $+3$ & $\mathbf{+1.7{\pm}1.5}$ \\
Entropy retained & 19\% & 17\% & 21\% & $19{\pm}2$\% & 71\% & 68\% & 73\% & $\mathbf{71{\pm}3}$\% \\
Positive unique & 0.4 & 0.3 & 0.5 & $0.4{\pm}0.1$ & 2.7 & 2.4 & 2.9 & $\mathbf{2.7{\pm}0.3}$ \\
\bottomrule
\end{tabular*}
\end{table}

\end{document}